\journalname{Preprint}
\begin{document}

\title{Detecting truth,  just on parts \thanks{Partially supported by the Spanish Research Project MTM2017-88796-P.}}


\author{  Zolt\'an Kov\'acs \and Tom\'as Recio \and M.~Pilar V\'elez }

\authorrunning {Kov\'acs, Recio and  V\'elez }

\institute{
           Zolt\'an Kov\'acs (ORCID: 0000-0003-2512-5793) \at
              Private P\"adagogische Hochschule der Di\"ozese Linz, Salesianumweg 3, 4020 Linz, Austria\\
                            \email{zoltan@geogebra.org} 
               \and
           Tom\'as Recio (ORCID:  0000-0002-1011-295X) \at
              Dpto.~Matem\'aticas, Estad\'\i stica y Computaci\'on, Universidad de Cantabria, Av. de los Castros, s/n, 39071 Santander, Spain \\
               \email{tomas.recio@unican.es}  
               \and
        M. Pilar V\'elez (ORCID:0000-0002-5724-4300) \at
              Dpto.~Ingeniería Industrial, Universidad Antonio de Nebrija, c/ Pirineos, 55, 28040 Madrid, Spain\\
               \email{pvelez@nebrija.es}  
}

\date{Received: date / Accepted: date}

\maketitle

\begin{abstract}
We introduce and discuss,  through a computational algebraic geometry approach, the automatic reasoning handling of propositions that are simultaneously true and false over some relevant collections of instances.  A rigorous, algorithmic criterion is presented for detecting such cases,  and its performance is exemplified through the implementation of this test on the dynamic geometry program \textit{GeoGebra}.

\keywords{automatic deduction in geometry,  automatic geometry theorem proving \and automatic geometry theorem discovery \and elementary geometry \and Gr\"obner basis \and zero divisor \and true on parts, false on parts \and true on components \and dynamic geometry}
\end{abstract}

\section{Introduction}
\label{intro}
In this paper, we deal with a particular issue in automated proving and discovery of theorems in elementary geometry by algebraic geometry methods, namely, the case of statements that are neither true nor false (in some specific sense we will describe in detail below). Very roughly, the algebraic geometry approach to automated reasoning in geometry proceeds by translating a geometric statement $\{H\Rightarrow T\}$ into polynomial expressions, after adopting a coordinate system. Then, the geometric instances verifying the hypotheses (respectively the thesis) can be represented as the solution of a system of polynomial equations $V(H)=\{h_1=0,\dots,h_r=0\}$ (respectively $V(T)=\{f=0\}$),  describing the hypotheses (resp. the thesis) variety. 

Thus, when $V(H) \subseteq V(T)$ we can say that the theorem is \emph{always true}, i.e.~true for all instances of the hypotheses. But this fact rarely happens, even for well established theorems, because the algebraic translation of the geometric construction described by the hypotheses usually forgets explicitly excluding some  degenerate cases (say, when a triangle collapses to a line,  when  two points defining a line become coincident, etc.) where the given statement fails.  Unfortunately, many of these cases are not intuitively obvious and they are hard to detect \emph{a priori}; even if detected, it happens that introducing negative hypotheses (i.e.~ declaring that some geometric elements in the given statement should not verify a certain relation among them, such as the non-collinearity of the three vertices of a triangle) might involve some subtle issues, cf. \cite{DR}, \cite{LPR}.

Thus, a delicate, but more useful,  approach for automated reasoning consists in exhibiting, first, a collection of independent variables modulo $H$, so that no polynomial relation among them holds over the whole $V(H)$. That is, identifying a collection of parameters describing the coordinates of the free elements in the given geometric statement. As we will show in the next Section, such identification involves quite delicate issues;  but let us concentrate here in providing just a rough description of the global procedure. Now, once such independent variables have been selected, the irreducible components of $V(H)$ where these variables do not remain independent are assumed to describe \emph{degenerate} instances and are, in some sense, negligible. Accordingly,  a statement is called \emph{generally true} if the thesis holds, at least, over all the non-degenerate components. 

On the other hand, if over each non-degenerate component the thesis does not identically vanish, the statement is labeled as \emph{generally false}: this includes the \emph{always false} case, where the thesis does not hold at all,  i.e.~when $\{f\neq 0\}$  over every point of  $V(H)$, and also the case when the equality $\{f = 0\}$ holds true just at some negligible set, i.e.~over  a degenerate component of $V(H)$) or over a proper subvariety of a non-degenerate component.  See  Fig.~\ref{fig1} for a graphical representation of all these terms and the relations among them. A more detailed description of this quite established terminology (with small variants) can be consulted, for instance, at \cite{RV99}, \cite{CLS} or \cite{ZWS}.  

Let us point out that it is within this more flexible concept of truth that the algebraic approach to automated theorem reasoning has shown all its capacity to verify and to discover thousands of geometric statements, either elementary or sophisticated. And, behind this success story lies the existence of algebraic methods to test the general truth or failure of a statement without actually having to explicitly compute the decomposition of the given hypotheses variety on components, without finding which of these are degenerate or not, without verifying, one by one,  over which components the thesis holds\dots. Thus, avoiding the use of costly primary decomposition algorithms is an implicit, but strong,  restriction that the reader must keep in mind to truly understand what follows.

\begin{figure}[h]
\includegraphics[scale=0.9]{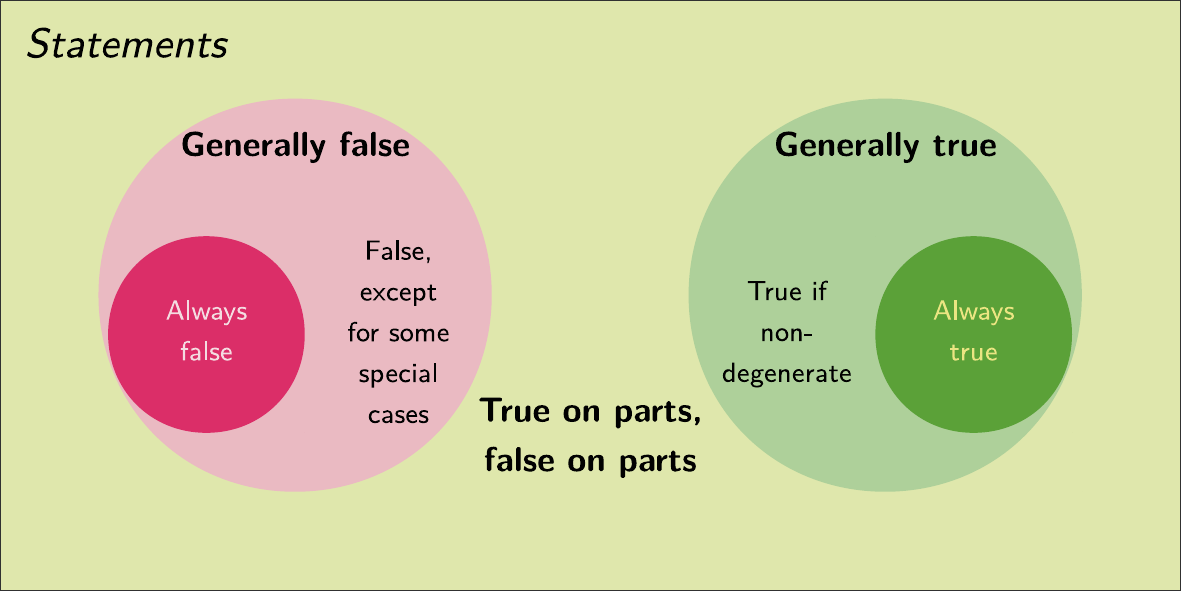}\caption{Different types of truth for a geometry statement}
\label{fig1}       
\end{figure}

It follows from the previous definition that  to be generally true and to be generally false are incompatible; there can be no statement having both properties at once.  On the other hand---and this is the object of interest in this paper---there are statements which happen to be,  simultaneously, not generally true and not generally false. That is,  statements that are both false over some non-degenerate component and that are true over some other non-degenerate component, i.e. statements that are \emph{true, just on some components}. 
We have decided---for the better comprehension of this notion by general users of dynamic geometry programs implementing this feature, such as {\it GeoGebra}---to label such statements in a more colloquial way, as  statements \emph{true on parts, false on parts}.
The specific interest of this concept, in the context of automated reasoning in geometry, is   briefly justified in Section \ref{sec1}, where the basic definitions  and other fundamental issues are precisely introduced.

Some explicit examples of this situation have been provided by one of the authors of this paper as early as 1998 in \cite{R}; further examples appeared in \cite{tapas} or  in sections 2.3 and 2.4. of \cite{RV99}, reproducing in English the example, in Spanish, from \cite{R}). A recent paper, specifically devoted to discuss and to present such cases is \cite{BR}.  But it was not, until even more recently, at \cite{ZWS}, that a new terminology to describe such cases has been introduced, labelling as \emph{generally true on components} or, simply,  as \emph{true on components}, those statements that are true in some,  and false in some other,  non-degenerate components: i.e.~statements that, according to our terminology,  are simultaneously \emph{true on parts, false on parts}. Moreover, \cite{ZWS} presents an algorithmic test to check this property.
 
Now, since the idea of \emph{true on components}, or \emph{true on parts, false on parts}, is based on the concepts of degeneracy and of irreducible component, it follows that both the choice of the field over which the prime decomposition is performed and the choice of the independent variables --which determine which components are to be considered as degenerate-- could be essential.  

Attempting to address these options, Section \ref{sec1} in this paper recalls some fundamental notions and argues why we have decide adopting, as our algebraic geometry framework, the consideration of statements defined over some base field $K$, but thinking of the associated algebraic varieties over an algebraically closed extension $K \subseteq L$. This framework, although quite classical (c.f.~\cite{Z-S}) and already quite common in the automated theorem proving context (c.f.~\cite{Chou}, \cite{CLS},  \cite{RV99}), is  more general than the one used in the paper \cite{ZWS}.  In fact, \cite{ZWS} considers just one algebraically closed field, both as base field and as field of solutions. 
  
 Moreover, Section \ref{sec1} elaborates a restricted---yet quite relevant---notion of non-degeneracy, by adopting always a set of independent variables of cardinal equal to the dimension of the hypotheses variety. This condition is a kind of intuitive translation of the expected fact that, for given values of the independent coordinates, the hypotheses variety contains just a finite number of configurations (as in \cite{CLS}, Chapter 6, Section 4, Definition 4; see also the Dimensionality Restriction in \cite{Chou};  or the need, expressed in \cite{RV99} or \cite{DR}, to include the equality of the dimension and the number of free variables to obtain sound results). Again, by adopting this convention we differ from the approach in \cite{ZWS}.
 
In this specific algebro-geometric framework, Theorem \ref{testRV} of Section  \ref{sec2} provides a new, simpler way,  of testing if a statement is true and false on parts, by just detecting if a pair of elimination ideals are zero or not. As a consequence, it is shown the somehow surprising result (c.f.~Corollary \ref{extensionfield}) that the notion of \emph{true on parts, false on parts} does not depend on the base field being considered.  Moreover, Section \ref{sec2} includes an extension to our more general framework of the main result of \cite{ZWS}, by providing  a direct proof  (cf. Theorem \ref{equivalent}) of the equivalence (for degenerate components of the special kind) of our test (c.f.~Theorem 1) and that of \cite{ZWS}, here labeled as Theorem \ref{testZWS}.  Section 3 also provides a counterexample to this equivalence (Example \ref{ex1}) if the mentioned  degeneracy concept is not fulfilled;  and shows how this example helps to explain some discrepancy,  mentioned at the paper \cite{ZWS},  with a previous result of \cite{MR}.
 
Finally, Section \ref{sec3} shows some examples on how to actually deal with the concept of \emph{true on parts, false on parts},  by performing our test  as  implemented in the dynamic geometry software \textit{GeoGebra}. The paper concludes with a reflection on the relevance of such automated reasoning tools capable of handling this subtle idea of truth (on parts!).

\section{Motivation for a new framework}
\label{sec1}

It might seem, at first glance, that the \emph{true on parts, false on parts} case is a useless oddity of the automated proving method, arising just in some very artificial statements without a real geometric motivation. But it is not so. 
 
  In fact, as described rigorously in \cite{DR},  Section 3, Proposition 2 (although without explicitly providing a concrete naming for this situation), arriving to a \emph{true only on some components} case means  ``yielding a warning sign for the need to factorize".  Luckily,  such warning sign---even without actually involving algorithmic factorization, but human reflection, instead---leads to the discovery of new (surely for the user, but, sometimes,  for the scientific community as well) theorems. For instance, it has already allowed one of the authors of this paper to work out some contributions, such as a converse to Varignon Theorem \cite{BR}, or the generalization of the Steiner-Lehmus Theorem (c.f.~Example 9 in \cite{DR},  fully described in \cite{LRV}). 
 In summary: we think it could be quite rewarding to devote some efforts to understand better the \emph{true on parts, false on parts} concept, and this is the goal of the current section.
 
 Let us first start analyzing a simple example, a modified version of the Example in Section 3 from \cite{BR}. Consider  points $A (0,0), B (2,0)$ in the plane and construct circles $c= (x-0)^2+(y-0)^2-3$ and $d=(x-2)^2+(y-0)^2-3$, i.e.~circle $c$ is centered at $A$ and circle $d$ is centered at $B$ and both have the same radius $r$, where $r=\sqrt 3$.  Finally, we consider the two points of intersection of these circles, namely, $E (u, v)$ and $F (m, n)$, so the hypotheses are 
 $$\begin{array}{l}
  (u-0)^2+(v-0)^2-3,(u-2)^2+(v-0)^2-3,\\
  (m-0)^2+(n-0)^2-3, (m-2)^2+(n-0)^2-3. \end{array} $$
 
 \noindent Then,  the thesis states the parallelism of the lines $AE$ and $BF$, that is, the vanishing of the polynomial
 $$(u-0)\cdot(n-0)-(v-0)\cdot(m-2).$$
 \noindent The ideal of hypotheses is clearly zero-dimensional, so there are no independent variables, nor degenerate components. Its primary components, over the rationals, are\\
 $$\begin{array}{l}
 \left<v-n, (m-2)^2+n^2-3, (u-2)^2+v^2-3, m^2+n^2-3, u^2+v^2-3\right>, \\
  \left<v+n, (m-2)^2+n^2-3, (u-2)^2+v^2-3, m^2+n^2-3, u^2+v^2-3\right>\end{array}$$
\noindent and it easy to check that the thesis is false over the first one and true over the second. This a clear, simple example of a \emph{true on components} statement, arising in an elementary geometry context. 
 
\begin{figure}[h]
\includegraphics[width=0.8\textwidth]{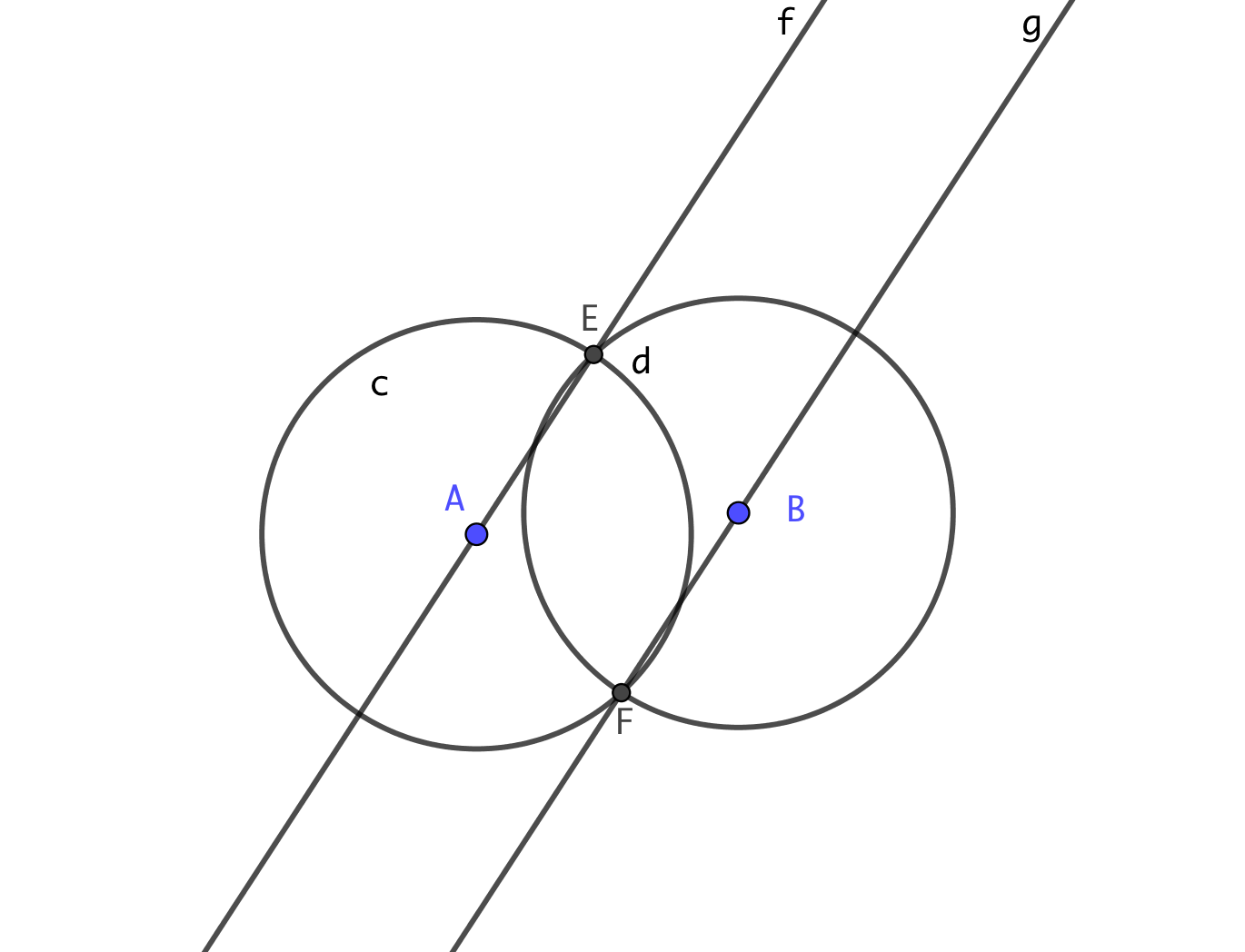}\caption{A simple example}
\label{figsimple}       
\end{figure}

Even considering this is a very basic example of a \emph{true on parts, false on parts} statement, it already allows us to emphasize different issues that motivate our choices in this paper.

First of all, it highlights the relevant role of the base field (the field of coefficients where the hypotheses and thesis equations are described, i.e.~$\mathbb{Q}$, in this example).  It is obvious that if we would have chosen instead,  as base field, $\mathbb{Q}(\sqrt 2)$, the hypotheses ideal could have been the following precise description of the two constructed points $E$ and $F$, $\left<u-1,v-\sqrt 2,m-1,n+\sqrt 2\right>$, clearly verifying the thesis $u\cdot n-v\cdot(m-2)$.  The statement would have been a true one, then.

Moreover, even if keeping the original hypotheses ideal, it is clear that its primary decomposition depends on the coefficients field for the ring where the components are computed. A trivial example is the ideal $\left<x^2-2\right>$, that is irreducible over $\mathbb{Q}[x]$, but has two components $\left<x-\sqrt 2\right>$ and $\left<x+\sqrt 2\right> $ over $\mathbb{Q}(\sqrt 2)[x]$. Likewise, the hypotheses ideal
\begin{equation*}
\begin{array}{l}
\left<(u-0)^2+(v-0)^2-3,(u-2)^2+(v-0)^2-3,\right.\\
\left.(m-0)^2+(n-0)^2-3, (m-2)^2+(n-0)^2-3\right>
\end{array}
\end{equation*}
has two components over $\mathbb{Q}[u,v,m,n]$, but four over $\mathbb{Q}(\sqrt 2)[u,v,m,n]$, namely, 
$$\left<n -\alpha, v -\alpha, (m-2)^2+n^2-3, (u-2)^2+v^2-3, m^2+n^2-3, u^2+v^2-3)\right>,$$
\noindent  considering all possible sign values for $\alpha=\sqrt 2$. Notice that, as over $\mathbb{Q}$, the statement would also remain \emph{true on parts, false on parts}over $\mathbb{Q}(\sqrt 2)$, because here the thesis would be false over two components and true over the other two.

Obviously, the idea of accepting, as input, polynomials over field extensions of the rationals, could avoid some---but not all--- \emph{true on parts, false on parts} cases, by allowing the user to be more specific regarding the hypotheses data, but it also  implies the symbolic manipulation of quite complicated expressions and it is, in practice, not realistic. 

These considerations are behind our generalized approach to the \emph{true on parts, false on parts} concept, as developed in the next section, and to the extension of the Zhou-Wang-Sun  test (\cite{ZWS}) to this framework, see Theorem  \ref{equivalent}. 

On the other hand, the above example does not requires any discussion about the idea of degenerate components, since this concept is linked to the idea of independent parameters for our hypotheses set, and there are none in this example, as the dimension of the hypotheses variety is zero. So here all components can be thought as non-degenerate. But, in general, it is well known, since long ago, that the precise choice of a meaningful set of independent parameters and,  correspondingly, the notion of degenerate components, is an involved issue.  And, since the definition of \emph{true on parts, false on parts} involves the truth and failure of a statement over some non-degenerate components, both the idea of independence of parameters and of degeneracy are concepts that can not be avoided in this context.  

Let us recall (c.f.~\cite{CLS}, particularly Chapter 9, Section 5, for precise definitions and basic results) that a collection of variables is considered to be independent over the hypotheses ideal if there is no polynomial in these variables alone belonging to the ideal; and that a component is labeled as degenerate if over the component the chosen independent variables verify some non-trivial relation. This notion intends to reflect the idea of \emph{free} variables for our geometric statement; moreover, it is related to the concept of Hilbert Dimension (i.e.~the dimension) of the ideal, since this dimension coincides with the largest cardinal of a set of free variables modulo the given ideal (but notice that not every set of free variables can be enlarged to one with maximum cardinality).  

There are, in general, many possible sets of independent variables for the ideal $H$, even concerning maximal sets of independent variables or  even considering  maximum-size  maximal sets. For example, if $H=\left<x\cdot y\right>$, both $\{x\}$ and $\{y\}$ are maximal and maximum-size sets of independent variables; and, if $H=\left<x\cdot y, ~ x\cdot z\right>$, then both $\{x\}$ and $\{y, z\}$ are maximal sets of independent variables, but only the second has cardinality equal to the dimension of $H$. 

When dealing with geometric statements it seems logical to take as independent variables the coordinates of free points in the geometric construction we are dealing with. In most cases this ``intuitively'' maximal set of independent variables is maximum-size, but there are examples in which the coordinates of free points in the geometric construction do not provide a maximum-size set of independent variables. See,  for instance,  Example 2 in \cite[p.~72]{RV99}: the number of coordinates of free points in the chosen geometric construction is 5, but the Krull dimension of $H$ is 6. 

 Another typical example of the difficulties involved in handling this concept is Example 7 in \cite{DR}, concerning Euler's formula regarding the radii of the inner and outer circles of a triangle with vertices $(-1,0),(1,0), (u[1],u[2])$. Here the dimension of the hypotheses variety is expected to be 2 (referring to the two coordinates of the only free vertex of the triangle), but applying the algebraic definition it turns out to be three\dots , unless it is explicitly required, as new hypothesis, that $(u[1],u[2])$ does not lie in the $x$-axis! This quite common problem---related, as mentioned above, to the difficult  \emph{a priori} control and detail of all geometric degeneracies---is already considered in the basic reference of \cite{Chou}.  
 
 Despite of these difficulties, we think---as argued and documented in the introduction---that the closest choice reflecting, in most cases, human intuition, is that of considering a maximum-size set of independent variables as the one best related to the notion of \emph{true on parts, false on parts}.  The precise definition will be developed in the next Section.  But we are aware that  the idea of \emph{true on parts, false on parts} depends on the adopted formulation of non-degeneracy in each particular case and, therefore, on the selected set of free variables for the given statement.
 In the following Section  \ref{sec2} we will  show some precise statements and counterexamples concerning the consequences of this choice.   
 
 As a toy example, consider the following (artificial) statement:  take as hypothesis set the union of the two axes in the plane, i.e.~the set of points verifying $xy=0$. Its dimension is one, so we might consider that the geometry of the problem involved in this formulation requires just one free variable, say, $x$ and,  thus, we could label  the $x$-axis as the only non-degenerate component in this problem. So, if the thesis is $y=0$, we could conclude that it is generally true, since it holds over the $x$-axis.  But if we consider, instead, as the only non-degenerate component,  the $y$-axis, then $y=0$ would be generally false. And if we choose to consider both $x, y$ as two non-degenerate parameters simultaneously ruling our construction, then the thesis $y=0$ will be true on components, since it will hold on the $x$-axis and fail on the $y$-axis.  
 

\section{Statements true on parts, false on parts}
\label{sec2}
As above, let us consider  an algebraically translated statement  $\{H  \Rightarrow  T\}$, where $H$ stands for the equations describing the geometric construction of the hypotheses and $T$ describes the thesis. By abuse of notation, we will denote also by   $H$ and $ T$    the ideals generated by the polynomials involved in the equations describing the statement. 
 In what follows we will suppose that $H=\left<h_1,
\dots , h_r\right>$ and $T=\left<f\right>$ are the hypotheses and thesis ideals  in a polynomial ring $K[X]$, $X=\{x_{1}, \dots, x_{n}\}$, where the variables $X=\{x_{1}, \dots, x_{n}\}$ refer to the coordinates involved in the algebraic description of the hypotheses, over a base field $K$. 

We will deal with another field $L$, an algebraically closed extension on $K$ (for instance $L={\mathbb C}$ and $K={\mathbb Q}$), and the geometric instances verifying the hypotheses (respectively, the thesis) of the statement will be considered as the algebraic variety $V(H)$ (respectively, $V(T)$) in the affine space  $L^n$. Therefore,  $V(H)$ and $V(T)$ are algebraic varieties of $L^n$, but defined over $K$.   Thus, we are working here with two different fields: the one of coefficients for the algebraic description of the geometric setting and the one where the solutions of the equations are to be considered at. In fact, most elementary geometry constructions can be translated into algebraic equations with rational coefficients, while the algorithms we will consider (for its higher performance) to work with these equations are those of algebraic geometry over an algebraically closed field (i.e not over the rationals or over the reals).

In general we will suppose that  $Y=\{x_{1}, \dots, x_{d}\}$ ($0\leq d\leq n$) is a \emph{maximum-size set of independent variables} for the hypotheses ideal $H$, that is
\begin{enumerate}[(i)]
\item $H\cap K[Y]=\left<0\right>$, and
\item for any other set of variables $Z\subset X$ with $r>d$ elements, $H\cap K[Z]\neq\left<0\right>$. 
\end{enumerate}
Consequently, the Krull dimension of the hypothesis variety $V(H)$ must be $d$.

Following the notation above we recall the following definitions which are usual in the literature about the algebraic geometry approach to automated reasoning  in geometry \cite{RV99}. 

\begin{definition}
Let  $\{H  \Rightarrow  T\}$ be a geometric statement and fix a set $Y=\{x_{1}, \dots, x_{d}\}$ of independent variables for the hypotheses ideal $H$.
\begin{itemize}
\item The statement is \emph{generally true} if the thesis $f$ vanishes on all non-degenerate $K$-components of the hypotheses variety $V(H)$.
\item The  statement is \emph{generally false} if the thesis $f$ vanishes on none of the non-degenerate $K$-components of the hypotheses variety $V(H)$.
\end{itemize}
\end{definition}

The related concept of ``generally true on components'' statements was introduced by Zhou, Wang and Sun  in \cite{ZWS} but in a slightly different, less general,  context, assuming $K=L$,  algebraically closed. Here we mimic this idea over our more general framework, as follows: 

\begin{definition}
Let  $\{H  \Rightarrow  T\}$ be a geometric statement formulated over $K$. The statement is labelled as \emph{true on parts, false on parts} if the thesis $f$ vanishes on some but not all non-degenerate $K$-components of the hypotheses variety $V(H)$ in $L^n$. That is, if it is neither generally true nor generally false.
\end{definition}

In \cite{RV99} the reader can find algorithmic criteria for the generally true and the generally false cases,  by means of some elimination ideals with respect to independent variables of the free points coordinates. Moreover, in this reference it is shown how to derive, from these elimination ideals,  some conditions to discover new theorems. This approach has been implemented in the widely disseminated dynamic geometry software \textit{GeoGebra} \cite{BHJKPRW} and \cite{Tutorial}.

Here we apply these criteria to our ``true on parts, false on parts'' context, as follows:

\begin{theorem}\label{testRV}
Let  $\{H  \Rightarrow  T\}$ be a geometric statement and fix a maximum-size set $Y=\{x_{1}, \dots, x_{d}\}$ of independent variables for the hypotheses ideal $H$ (i.e.~$d=\dim (H)$).
\begin{itemize}
\item[a)] The statement $\{H  \Rightarrow  T\}$ is  generally true if and only if 
$$\left<h_1,\dots , h_r, f\cdot t-1\right>K[X,t]\cap K[Y]\neq\left<0\right>.$$
\item[b)] The statement $\{H  \Rightarrow  T\}$ is  generally false if and only if 
$$\left<h_1,\dots , h_r,f\right>K[X]\cap K[Y]\neq \left<0\right>.$$
\end{itemize}
\end{theorem}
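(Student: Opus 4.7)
The plan is to interpret each elimination ideal in the theorem geometrically, as the vanishing ideal of the Zariski closure (in $L^{d}$) of a projection, and then read off the vanishing/nonvanishing condition from the decomposition of $V(H)$ into its irreducible $K$-components. Write $V(H)=W_{1}\cup\dots\cup W_{s}$ for this decomposition over $K$, and recall that, since $Y$ is a maximum-size independent set modulo $H$, a component $W_{i}$ is non-degenerate exactly when the projection $\pi_{Y}\colon L^{n}\to L^{d}$ restricts to a dominant map on $W_{i}$. The underlying tool is the Closure Theorem: for any ideal $J\subseteq K[X,t]$, the ideal $J\cap K[Y]$ cuts out in $L^{d}$ the Zariski closure of $\pi_{Y}(V_{L}(J))$; in particular $J\cap K[Y]=\langle 0\rangle$ if and only if $\pi_{Y}(V_{L}(J))$ is dense in $L^{d}$.

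For part (a), I would apply the Rabinowitsch trick: the variety of $\langle h_{1},\dots,h_{r},f\cdot t-1\rangle$ in $L^{n+1}$ is the graph of $1/f$ over $V(H)\setminus V(f)$, so its projection to $L^{n}$ equals $V(H)\setminus V(f)$. The Zariski closure of this set is the union of those $W_{i}$ on which $f$ does not vanish identically. Because $L^{d}$ is irreducible, a finite union of irreducible closed subsets of $L^{d}$ fills $L^{d}$ iff one of them already does, so the union projects densely onto $L^{d}$ iff at least one such $W_{i}$ is non-degenerate. Hence the elimination ideal is nonzero precisely when every non-degenerate $W_{i}$ is contained in $V(f)$, which is the definition of \emph{generally true}.

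For part (b), the variety of $\langle h_{1},\dots,h_{r},f\rangle$ is $V(H)\cap V(f)=\bigcup_{i}\!\left(W_{i}\cap V(f)\right)$. For those $W_{i}$ on which $f$ vanishes identically, the intersection equals $W_{i}$; for the remaining $W_{i}$, irreducibility forces $W_{i}\cap V(f)$ to be a proper closed subset, whose dimension drops strictly below $\dim W_{i}\leq d$ and whose projection to $L^{d}$ therefore cannot be dense. Consequently, the total projection is dense in $L^{d}$ iff some non-degenerate $W_{i}$ lies in $V(f)$, so the elimination ideal is nonzero precisely when no non-degenerate $W_{i}$ lies in $V(f)$; this is the definition of \emph{generally false}.

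The main delicate point will be the interplay between the base field $K$ and the algebraically closed extension $L$: the elimination ideal lives in $K[Y]$, while the projection we wish to describe is computed in $L^{d}$. This compatibility is standard but must be invoked explicitly—flatness of $K\hookrightarrow L$, or equivalently a lex Gr\"obner basis argument, yields $\langle h_{1},\dots,h_{r},f\cdot t-1\rangle L[X,t]\cap L[Y]=(\langle h_{1},\dots,h_{r},f\cdot t-1\rangle K[X,t]\cap K[Y])\,L[Y]$, and analogously for the ideal appearing in (b), so that the closure-of-projection argument carried out over $L$ correctly controls the $K$-ideal stated in the theorem. Once this base-change step is in place, matching the $K$-irreducible components to their dominance behaviour under $\pi_{Y}$ closes the argument.
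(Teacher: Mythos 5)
Your argument is correct, but it reaches the result by a more geometric route than the paper. You interpret both elimination ideals through the Closure Theorem (after the flat base change $K\hookrightarrow L$ that you rightly make explicit, since the elimination lives in $K[Y]$ while the projection is taken in $L^{d}$), and then decide density of $\pi_{Y}$ on the union of $K$-components, using irreducibility of $L^{d}$ and the fact that images under projection cannot gain dimension. The paper instead argues ideal-theoretically: for a) it invokes the characterization of $\left<h_1,\dots,h_r,f\cdot t-1\right>K[X,t]\cap K[X]$ as the saturation $(H:f^{\infty})$, i.e.\ the intersection of the primary components of $H$ whose associated primes do not contain $f$, and reads off degeneracy of the ``failure'' components from a nonzero element of $K[Y]$; for b) it gives one direction by writing $g=g_1h_1+\cdots+g_rh_r+g_{r+1}f$ and evaluating on a non-degenerate component, and the other by explicitly constructing a witness $g=\prod g_W\cdot\prod g_U$ (using that $\dim(\mathfrak{p}+\left<f\right>)<d$ forces a nonzero element of $(\mathfrak{p}+\left<f\right>)\cap K[Y]$) and then appealing to the Nullstellensatz to land in the radical of $\left<h_1,\dots,h_r,f\right>$. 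Your dimension step ``$\dim\bigl(W_i\cap V(f)\bigr)<\dim W_i\le d$, hence the projection is not dense'' is exactly the geometric counterpart of that algebraic step, and you place the maximum-size hypothesis where it is genuinely needed (the bound $\dim W_i\le d$ in part b); the paper additionally remarks, which you do not but need not, that part a) and the ``if'' half of b) hold without maximum-size. What your approach buys is a uniform mechanism for both parts and a transparent treatment of the $K$ versus $L$ issue; what the paper's buys is an explicit polynomial certificate in the elimination ideal, with no reliance on the Closure Theorem or on the behaviour of dimension under morphisms.
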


\begin{proof}
The proof of a) is quite straightforward. In fact, the ideal 
$$\left<h_1,\dots , h_r, f\cdot t-1\right>K[X,t]\cap K[X]$$
\noindent is usually named, in commutative algebra, as the \emph{saturation} of $H$ by $f$ and it is well known (c.f.~\cite {DR}, Appendix, in particular, Remark 5) that it represents the intersection of the primary components of $H$ such that the associated primes do not contain $f$, i.e.~such that the thesis does not hold over the corresponding irreducible component.  It follows that if the intersection of this saturation with $K[Y]$ contains a non-zero polynomial $g\in K[Y]$, then all such ``failure" components must be degenerate, as they all contain $g$, a polynomial in the independent variables. 

And, conversely, if all the prime ideals associated to the primary ideals in the saturation are degenerate, there is a non-zero polynomial $g\in K[Y]$ in each of them and, then, a power of $g$ must be in the corresponding primary component. Thus, the product of all of these $g$'s is in the intersection of the primary ideals, so in the saturation, yielding 
$$\left<h_1,\dots , h_r, f\cdot t-1\right>K[X,t]\cap K[Y]\neq\left<0\right>.$$
\noindent  Notice this statement is true even if the independent variables set $Y$ is not of maximum-size. 

Now, we are going to make a detailed demonstration of b), since in \cite{RV99} it is just sketched as a footnote. 
For the ``if'' part of b), suppose there is a non-zero $g\in \left<h_1,\dots , h_r,f\right>K[X]\cap K[Y]$. Then $g=g_1 h_1+\cdots +g_r h_r+ g_{r+1} f$ for some $g_i\in K[X]$. Let $W$ be a non-degenerate component of $V(H)$. Then, $g$ cannot vanish on $W$, because $I(W)\cap K[Y]=\left<0\right>$. As $h_i$ vanishes on $W$ for all $i=1,\dots, r$, we have that $f$ must not vanish on all $W$. Remark that for this part of the proof we have not used that $Y$ is maximum-size.

For the proof of the ``only if'' of b) we have to assume  that $Y=\{x_{1}, \dots, x_{d}\}$ ($0< d\leq n$) is a maximum-size set of independent variables for the hypotheses ideal $H$. Suppose that $f$ vanishes on none of the non-degenerate components of $V(H)$.

Take any non-degenerate  component $W$ of $V(H)$  and let $\mathfrak{p}\subset K[X]$ be its associated prime ideal. Then, $\mathfrak{p}\cap K[Y]=\left<0\right>$. 
As $f$ does not vanish identically on $W$,  $\mathfrak{p}+\left<f\right>\supsetneq\mathfrak{p}$ and $\mathfrak{p}+\left<f\right>$  has  dimension less than $d$, by our maximum-size of $Y$ hypothesis. Then $\{x_{1},  \dots, x_{d}\}$ cannot be independent for $\mathfrak{p}+\left<f\right>$. Therefore, there is a non-zero  $g_W\in ( \mathfrak{p}+\left<f\right>) \cap K[Y]$ for each non-degenerate component $W$ of $V(H)$, vanishing over the intersection of this component and the thesis.
For each degenerate component $U$ of $V(H)$ take now a polynomial $g_U\in I(U)\cap K[Y]$.

Let $g$ be the product of all $g_W$ and all $g_U$. Then this product vanishes over all the points of $V(H)$ where the thesis holds, because it vanishes both over all degenerate components and over the zeroes of $f$ in the non-degenerate components. Thus   $g\in \sqrt{\left<h_1,\dots , h_r,f\right>}K[X]\cap K[Y]$ and therefore $\left<h_1,\dots , h_r,f\right>K[X]\in K[Y]\neq \left<0\right>$.
\end{proof}

Obviously, Theorem \ref{testRV} provides a straightforward test for detecting if an statement is true and false on components, by simply checking if the result of performing  two eliminations is zero in both cases or not. Moreover, several algorithms for elimination over algebraically closed fields, interpreted through a Gr\"obner basis computation, are already implemented in different computer algebra systems---including Giac,  the one currently embedded in the dynamic mathematics program \textit{GeoGebra}---and work satisfactorily for our purposes. 

This criterion is also useful to understand an apparently contradictory fact, since we have previously emphasized  the base field dependence of the primary decomposition of an ideal:

\begin{corollary}\label{extensionfield}
Suppose that we consider some intermediate base field extension $K\subseteq K'\subseteq L$,  where $L$ is algebraically closed.  Now, although $\{H  \Rightarrow  T\}$ is defined over $K$, consider this statement as well as defined over $K'$. Then we claim that  being \emph{true on parts, false on parts} over $K$ is equivalent to being  \emph{true on parts, false on parts} over $K'$, that is, the notion of \emph{true on parts, false on parts} does not depend on the base field extension. 
\end{corollary}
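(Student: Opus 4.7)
The plan is to reduce everything to Theorem~\ref{testRV}, which characterizes being generally true and being generally false purely in terms of whether two explicit elimination ideals are zero. Since ``true on parts, false on parts'' is defined as ``neither generally true nor generally false,'' it suffices to show that each of these two elimination-ideal conditions is invariant under the field extension $K\subseteq K'$.

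First I would verify that the chosen set $Y=\{x_1,\dots,x_d\}$ remains a maximum-size set of independent variables for $H$ after extending scalars to $K'$. This has two parts. For the independence $(HK'[X])\cap K'[Y]=\langle 0\rangle$, I would invoke the base-change formula for elimination ideals, namely
\[
(IK'[X])\cap K'[Y] \;=\; (I\cap K[Y])\cdot K'[Y]
\]
for any ideal $I\subseteq K[X]$, applied to $I=H$. For maximality of the size, I would use that Krull dimension of finitely generated algebras is preserved under field extension, so $\dim(K'[X]/HK'[X])=\dim(K[X]/H)=d$, which forbids any larger independent set.

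Next, I would apply Theorem~\ref{testRV} in both rings. Being generally true over $K$ (resp.\ over $K'$) is equivalent to non-vanishing of the elimination ideal of $\langle h_1,\dots,h_r,ft-1\rangle$ with respect to $Y$; being generally false over $K$ (resp.\ over $K'$) is equivalent to non-vanishing of the elimination ideal of $\langle h_1,\dots,h_r,f\rangle$ with respect to $Y$. Applying the same base-change formula to each of these two ideals shows that the $K'$-elimination ideal is zero if and only if the $K$-elimination ideal is zero. Hence ``generally true'' transfers from $K$ to $K'$ and back, and so does ``generally false''; therefore so does their joint negation, which is exactly ``true on parts, false on parts.''

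The main technical point to be careful about is the base-change identity $(IK'[X])\cap K'[Y]=(I\cap K[Y])\cdot K'[Y]$. This can be justified either via flatness of the field extension $K\hookrightarrow K'$ (which makes $K[X]\hookrightarrow K'[X]$ faithfully flat and thus commutes with the intersection defining elimination) or, more concretely and in the spirit of the rest of the paper, by observing that a Gr\"obner basis of $I$ in $K[X]$ with respect to a lexicographic order eliminating the variables outside $Y$ remains a Gr\"obner basis of $IK'[X]$ with respect to the same order. Once this identity is in hand, the rest of the argument is a direct translation.
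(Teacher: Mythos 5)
Your proposal is correct and follows essentially the same route as the paper: reduce ``true on parts, false on parts'' to the two elimination-ideal tests of Theorem~\ref{testRV} and observe that whether these elimination ideals vanish is unchanged under the extension $K\subseteq K'$, since a Gr\"obner basis computed over $K$ (with an elimination order for $Y$) remains one over $K'$. You are somewhat more thorough than the paper's two-line argument, in particular by checking explicitly (via the base-change identity $(IK'[X])\cap K'[Y]=(I\cap K[Y])K'[Y]$ and the invariance of Krull dimension) that $Y$ stays a maximum-size independent set over $K'$, a point the published proof leaves implicit but which is needed to apply Theorem~\ref{testRV} over $K'$.
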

\begin{proof}In fact, deciding if some elimination ideal is zero or not can be achieved by computing a Gr\"obner basis under a certain ordering and checking if there are some elements in the basis that just involve the non-eliminated variables.  But Gr\"obner basis computations are performed over the base field, and, thus, are independent of the field extension (i.e.~it will yield the same result over $K$ or over $K'$). 
\end{proof}

 What this result means is that, if both hypotheses and the thesis are defined over a common base field $K$, then the existence of some component where the thesis vanishes or  where it does not vanish is independent of the choice of any field extension of $K$ as a new base field; we can have more components over $K$ where $T$ vanishes, and a few less over $K'$, and the same can happen for components where $T$ does not vanish; but their existence over $K$ is equivalent to their existence over $K'$. 

It could seem that, since we are concluding that the field extension is not relevant in this context,  the above remark directly implies that our framework is equivalent to that of \cite{ZWS}, in which it was supposed that $K=L$, algebraically closed.  But things are quite subtle here. In fact,  Theorem 3.1 in \cite{ZWS} gives a necessary and sufficient condition for a geometric statement to be generally true on components over $K=L$, but without assuming maximum-size for the set of independent variables $Y=\{x_1,\dots, x_d\}$,  as we did in our Theorem \ref{testRV}. In what follows we will confirm that this fact is important, see Example \ref{ex1}, but that, otherwise, the result of \cite{ZWS} also holds in our framework. 

 In order to prove this,  let all the notations be the same as above. In particular, let $X=\{x_1,\dots, x_n\}$ be the set of variables representing the point coordinates involved in the algebraic description of the hypotheses ideal, suppose that $Y=\{x_1,\dots, x_d\}$ ($d\leq n$) is a subset of independent variables for $H$, but not necessarily maximum-size,  and denote by $Z=\{x_{d+1},\dots, x_n\}$ the rest of the variables.  Then, we have: 

\begin{theorem}[Theorem 3.1 \cite{ZWS}]\label{testZWS}
Let $L=K$ be an algebraically closed field and let $H'=\left<h_1,\dots,h_r\right> K(Y)[Z]$ be the extension of $H$ to $K(Y)[Z]$. Then the geometric statement $\{H\Rightarrow T\}$ is generally true on components (i.e. \emph{true on parts, false on parts}) if and only if f is a non-zero zero divisor in $K(Y)[Z]/\sqrt{H'}$.
\end{theorem}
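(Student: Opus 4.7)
My plan is to prove the theorem by relating the primary decomposition of $H$ in $K[X]$ to that of $H'$ in $K(Y)[Z]$, and then applying the standard description of zero divisors in a quotient by a radical ideal as the union of the minimal primes. The key observation is that $K(Y)[Z]=S^{-1}K[X]$ for $S=K[Y]\setminus\{0\}$, so passing from $K[X]$ to $K(Y)[Z]$ is a flat localization.

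First, I would recall that under this localization the primes of $K[X]$ that meet $S$ (equivalently, those $\mathfrak{p}$ with $\mathfrak{p}\cap K[Y]\neq\langle 0\rangle$, i.e.\ the degenerate ones) become the unit ideal, while the primes not meeting $S$ (the non-degenerate ones) remain prime and are in bijection with the primes of $K(Y)[Z]$ via extension/contraction. Writing $\sqrt{H}=P_1\cap\cdots\cap P_s\cap P_{s+1}\cap\cdots\cap P_m$ with $P_1,\dots,P_s$ the non-degenerate minimal primes, flatness of localization gives
$$\sqrt{H'}=\sqrt{H\cdot K(Y)[Z]}=P_1'\cap\cdots\cap P_s',$$
where $P_i'=P_i\cdot K(Y)[Z]$ are precisely the minimal primes of $\sqrt{H'}$. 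Moreover, for any $f\in K[X]$ one has $f\in P_i'$ iff $f\in P_i$: indeed, if $gf\in P_i$ with $g\in K[Y]\setminus\{0\}$, primality of $P_i$ together with $g\notin P_i$ (since $P_i\cap K[Y]=\langle 0\rangle$) forces $f\in P_i$.

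Next, I would apply the classical fact that in a Noetherian ring, the zero divisors modulo a radical ideal are exactly the union of its minimal primes, while the zero element of the quotient corresponds to belonging to all of them. Hence $f$ is a non-zero zero divisor in $K(Y)[Z]/\sqrt{H'}$ if and only if $f\in P_i'$ for at least one, but not all, $i\in\{1,\dots,s\}$. Translating through the dictionary above, and using the Nullstellensatz (valid since $L=K$ is algebraically closed), this amounts to saying that the thesis $f$ vanishes on some non-degenerate irreducible component $V(P_i)$ of $V(H)$ but fails to vanish on some other non-degenerate component $V(P_j)$, which is precisely the defining condition for $\{H\Rightarrow T\}$ to be true on parts, false on parts.

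The step I expect to be the main obstacle is verifying that extension to $K(Y)[Z]$ is compatible with both taking radicals and with the correspondence of primes, so that the bijection between non-degenerate minimal primes of $H$ and minimal primes of $H'$ really preserves membership of $f$. This is standard commutative algebra (flatness of localization, incomparability of contracted primes, commutation of radical with flat base change), but keeping the bookkeeping consistent across the two rings---and ruling out the appearance of embedded primes that could muddy the zero-divisor count---is where most of the care is required; once that is in place, the zero-divisor characterization and the translation back to irreducible components of $V(H)$ are routine.
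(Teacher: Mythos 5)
Your proof is correct, but it follows a genuinely different route from the paper, which in fact never proves Theorem~\ref{testZWS} directly: the result is imported from \cite{ZWS}, and what the paper establishes instead is Theorem~\ref{equivalent}, an equivalence between the zero-divisor condition and the two elimination conditions of Theorem~\ref{testRV}, under the extra hypothesis that $Y$ is maximum-size, via Lemma~\ref{lemma1} (zero-dimensionality of $K(Y)[Z]/\sqrt{H'}$ over $K(Y)$), Corollary~\ref{corollary} (algebraicity of $f$ modulo $\sqrt{H'}$) and Lemma~\ref{lemma2} (the Rabinowitsch/saturation characterization of $f\in\sqrt{H'}$). You argue directly on the component structure: since $K(Y)[Z]=S^{-1}K[X]$ with $S=K[Y]\setminus\{0\}$, the minimal primes of $\sqrt{H'}$ are exactly the extensions $P_i'$ of the non-degenerate minimal primes $P_i$ of $H$ (the degenerate ones extend to the unit ideal), membership of $f$ is preserved under extension/contraction, and in the reduced Noetherian ring $K(Y)[Z]/\sqrt{H'}$ the zero divisors are the union of the minimal primes while $0$ is their intersection; the Nullstellensatz (here $K=L$ is algebraically closed) then translates $f\in P_i$ into vanishing on the component $V(P_i)$, giving exactly the \emph{true on parts, false on parts} condition. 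The bookkeeping you flag as the main obstacle is indeed standard and goes through: radicals and finite intersections commute with localization, minimal primes disjoint from $S$ stay distinct and incomparable, a quotient by a radical ideal is reduced and so has no embedded primes, and $H\cap K[Y]=\left<0\right>$ guarantees at least one non-degenerate component, so the quotient ring is non-zero. As to what each approach buys: yours is self-contained, requires no maximum-size assumption (consistent with the statement as quoted from \cite{ZWS}), and explains structurally why in Example~\ref{ex1} the zero-divisor test still answers correctly while the elimination test does not; the paper's route, at the price of the maximum-size hypothesis, delivers the equivalence with a purely elimination-based test (the one implemented in \textit{GeoGebra}), is formulated uniformly for a base field $K$ inside an algebraically closed extension $L$ (cf.\ Corollary~\ref{extensionfield}), and keeps all decomposition-theoretic reasoning out of the resulting algorithm.
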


Next we give a direct proof of the equivalence between Theorems \ref{testRV} and \ref{testZWS} for being generally true on components in  a purely algebraic fashion, but taking a field $K$ and an algebraically closed extension $L$ and assuming  now that $Y$ is a maximum-size set of independent variables.

\begin{theorem}\label{equivalent}
Let  $\{H  \Rightarrow  T\}$ be a geometric statement as above and fix a maximum-size set $Y=\{x_{1}, \dots, x_{d}\}$ of independent variables for the hypotheses ideal $H$ (i.e.~$d=\dim (H)$) and let $Z=\{x_{d+1},\dots, x_n\}$ be the rest of the variables. Then, 
$$\left<h_1,\dots , h_r, f\cdot t-1\right>K[X,t]\cap K[Y]=\left<0\right> \text{ and } \left<h_1,\dots , h_r,f\right>K[X]\cap K[Y]=\left<0\right>$$ 
if and only if 
$$f \text{ is a non-zero zero divisor in } K(Y)[Z]/\sqrt{H'} \text{ where } H'=\left<h_1,\dots,h_r\right> K(Y)[Z].$$
\end{theorem}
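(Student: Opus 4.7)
The plan is to use Theorem \ref{testRV} as a bridge on both sides. The first elimination condition $\langle h_1,\dots,h_r,f\cdot t - 1\rangle K[X,t] \cap K[Y] = \langle 0\rangle$ says, by part (a), that $\{H\Rightarrow T\}$ is not generally true, i.e., $f$ fails to vanish on at least one non-degenerate $K$-component of $V(H)$; the second, by part (b), says the statement is not generally false, i.e., $f$ vanishes on at least one non-degenerate component. So the LHS of the equivalence translates precisely to the geometric statement that ``$f$ vanishes on some, but not all, non-degenerate minimal primes of $H$''.

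To translate the RHS into the same geometric condition, I would invoke the standard extension-contraction correspondence between prime ideals of $K[X]$ containing $H$ and disjoint from $K[Y]\setminus\{0\}$ and prime ideals of $K(Y)[Z]$ containing $H'$. Under the maximum-size hypothesis on $Y$, the non-degenerate minimal primes $\mathfrak{p}_1,\dots,\mathfrak{p}_s$ of $H$ (each necessarily of dimension $d=\dim H$, since $Y$ stays algebraically independent mod $\mathfrak{p}_i$ and $\mathfrak{p}_i$ is minimal over $H$) are in bijection with the minimal primes of $H'$, via $\mathfrak{p}_i\mapsto \mathfrak{p}_i K(Y)[Z]$; consequently $\sqrt{H'}=\bigcap_{i=1}^s \mathfrak{p}_iK(Y)[Z]$. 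Then I would apply the classical fact, via prime avoidance, that in a reduced Noetherian ring whose zero ideal decomposes as an intersection of minimal primes $\mathfrak{q}_i$, an element is a non-zero zero divisor iff it lies in at least one $\mathfrak{q}_i$ but not in all. Using the contraction identity $\mathfrak{p}_iK(Y)[Z]\cap K[X]=\mathfrak{p}_i$, the condition on $f$ reads exactly: $f\in\mathfrak{p}_i$ for some $i$ (the zero divisor part) and $f\notin\mathfrak{p}_j$ for some $j$ (the non-zero part), matching the geometric formulation extracted from the LHS.

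The main obstacle will be a careful justification of the bijection between the minimal primes of $H'$ in $K(Y)[Z]$ and the non-degenerate minimal primes of $H$ in $K[X]$: the maximum-size condition on $Y$ is precisely what guarantees that every minimal prime of $H'$ contracts to a minimal prime of $H$ (and not to a strictly larger prime), and Example \ref{ex1} discussed later is expected to show that the equivalence can fail otherwise. A secondary subtlety to handle cleanly is the two halves of ``non-zero zero divisor'': the ``non-zero'' half matches the first elimination condition (not generally true), and the ``zero divisor'' half matches the second (not generally false); keeping these linked in the right order is essential so that no degenerate component sneaks in on either side of the argument.
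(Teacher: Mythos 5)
Your proposal is correct, but it follows a genuinely different route from the paper's proof. The paper argues directly and algebraically, with no appeal to components or minimal primes: it shows (Lemma \ref{lemma1}) that $K(Y)[Z]/\sqrt{H'}$ is a nonzero ring of dimension zero over $K(Y)$, deduces (Corollary \ref{corollary}) that any $f\notin\sqrt{H'}$ satisfies some polynomial $p(t)\in K(Y)[t]$ modulo $\sqrt{H'}$, identifies the first elimination condition with membership of $f$ in $\sqrt{H'}$ via the saturation $(H:f^\infty)$ (Lemma \ref{lemma2}), and concludes with a minimum-degree argument: if a minimal such $p'$ has a nonzero constant term one contradicts the vanishing of the second elimination, and otherwise $p'(f)=f\cdot q(f)$ exhibits $f$ as a zero divisor; for the converse, a nonzero element of $\left<h_1,\dots,h_r,f\right>\cap K[Y]$ makes $f$ a unit modulo $\sqrt{H'}$, hence not a zero divisor. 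You instead route both sides through the geometric characterization: Theorem \ref{testRV} converts the two elimination conditions into ``$f$ vanishes on some but not all non-degenerate $K$-components'', and on the other side you use the localization $K(Y)[Z]=S^{-1}K[X]$ with $S=K[Y]\setminus\{0\}$ to identify the minimal primes of $H'$ with the non-degenerate minimal primes of $H$, together with the standard fact that in a reduced Noetherian ring the non-zero zero divisors are precisely the elements lying in some but not all minimal primes. This is legitimate (Theorem \ref{testRV} precedes Theorem \ref{equivalent} and may be cited), and it has the merit of making transparent that the zero-divisor test of \cite{ZWS} captures ``true on parts, false on parts'' relative to $Y$ even without the maximum-size assumption; its cost is the reliance on primary-decomposition/localization machinery, plus the small observation (needed for your bridge) that at least one non-degenerate component exists because $\sqrt{H}\cap K[Y]=\left<0\right>$, whereas the paper's argument is self-contained elimination-and-saturation algebra.

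One correction to your analysis of the ``main obstacle'': the maximum-size hypothesis is \emph{not} what guarantees that minimal primes of $H'$ contract to minimal primes of $H$. That is automatic for any independent set $Y$: primes of $S^{-1}K[X]$ correspond to primes of $K[X]$ disjoint from $S$, and any prime containing $H$ and disjoint from $S$ contains a minimal prime of $H$ that is again disjoint from $S$; so the extension/contraction bijection and the identity $\sqrt{H'}=\bigcap_i \mathfrak{p}_iK(Y)[Z]$ over the non-degenerate minimal primes hold regardless. Where maximum-size genuinely enters your argument is in the other bridge, the ``only if'' half of Theorem \ref{testRV}(b): without $d=\dim(H)$, the vanishing of the two eliminations no longer implies ``true on parts, false on parts''. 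Example \ref{ex1} illustrates exactly this point: there the contraction correspondence is intact (the only minimal prime is $\left<x\right>$, non-degenerate for $Y=\{z\}$, and $y$ is not a zero divisor), yet both eliminations vanish although the statement is generally false. Since your proof does assume maximum-size and only uses the correspondence where it is valid, this misattribution does not affect the correctness of the argument, but the justification of the bijection should not be presented as the step that consumes that hypothesis.
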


We need some lemmas for the proof of this theorem. For the lemmas below we assume the previous notations and the hypotheses of Theorem \ref{equivalent}.

\begin{lemma}\label{lemma1}
Let $H'$ be  the extension of the hypotheses ideal $H$ to $K(Y)[Z]$.
\begin{itemize}
\item[a)]The ring  $K(Y)[Z]/\sqrt{H'}$ is not zero.
\item[b)]$\dim_{K(Y)}(K(Y)[Z]/\sqrt{H'})=0$. 
\end{itemize}
\end{lemma}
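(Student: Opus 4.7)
The plan is to extract two complementary pieces of information from the hypothesis that $Y$ is a \emph{maximum-size} independent set: first, $H\cap K[Y]=\langle 0\rangle$ (independence itself); second, for each individual variable $x_j\in Z$, the set $Y\cup\{x_j\}$ has more than $d$ elements and therefore cannot be independent, so $H\cap K[Y,x_j]$ contains a nonzero polynomial. Both parts of the lemma follow from these two facts by routine commutative algebra after extending scalars along the localization $K[Y]\hookrightarrow K(Y)$.

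For part~(a), I would argue by contradiction. If $K(Y)[Z]/\sqrt{H'}$ were the zero ring, then $1\in\sqrt{H'}$, whence $1\in H'$. Writing $1=\sum_i g_i h_i$ with $g_i\in K(Y)[Z]$ and clearing a common denominator $q(Y)\in K[Y]\setminus\{0\}$ gives an equation of the form $q(Y)=\sum_i \tilde g_i(X)\,h_i\in H$, which places a nonzero element of $K[Y]$ inside $H$, contradicting independence. Hence $H'$ is a proper ideal of $K(Y)[Z]$ and the quotient is nonzero.

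For part~(b), the plan is to show that the image of every $x_j\in Z$ in $K(Y)[Z]/H'$ is integral over the field $K(Y)$. Pick any $x_j\in Z$ and use the maximum-size hypothesis to produce a nonzero $p_j\in H\cap K[Y,x_j]$. By independence of $Y$ the polynomial $p_j$ must have strictly positive degree in $x_j$, and its leading coefficient in $x_j$, being a nonzero element of $K[Y]$, becomes a unit in $K(Y)$. Dividing by this leading coefficient exhibits the image of $x_j$ as a root of a monic polynomial over $K(Y)$ modulo $H'$. Since $K(Y)[Z]/H'$ is generated as a $K(Y)$-algebra by the finitely many images of $x_{d+1},\dots,x_n$, all of which are now integral over the field $K(Y)$, the quotient is a finite-dimensional $K(Y)$-module, hence an Artinian ring, hence of Krull dimension zero. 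Taking the radical does not alter the prime spectrum, so $K(Y)[Z]/\sqrt{H'}$ also has Krull dimension zero, establishing~(b).

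I do not anticipate any serious obstacle: the whole argument is a direct translation of the definition of a maximum-size independent set through the localization $K[Y]\to K(Y)$. The one place that warrants a sentence of justification is the clearing-of-denominators step in~(a), where one must observe that the resulting polynomial in $K[Y]$ is genuinely nonzero, which holds precisely because $K[Y]$ is an integral domain and the common denominator $q(Y)$ was chosen nonzero to begin with.
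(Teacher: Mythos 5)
Your proof is correct and takes essentially the same route as the paper: part (a) is the observation that independence of $Y$ forces $H'$ to be a proper ideal (you make explicit the clearing-of-denominators step the paper leaves implicit), and part (b) uses the maximum-size hypothesis to produce, for each $x_j\in Z$, a nonzero polynomial of $H\cap K[Y,x_j]$ of positive degree in $x_j$. The only cosmetic difference is that you conclude (b) by passing through integrality and module-finiteness over $K(Y)$, whereas the paper infers dimension zero directly from the existence of these univariate relations over $K(Y)$; both finishes are standard.
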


\begin{proof}
\begin{itemize}
\item[a)] The condition that $Y$ is independent for $H$ in $K[Y,Z]$ is equivalent to say that $H'$ is not all $K(Y)[Z]$ (i.e.~$1 \notin H'$ and, equivalently, $1 \notin \sqrt{H'}$)). So that the ring  $K(Y)[Z]/\sqrt{H'}$ is not zero.
\item[b)] Notice that the ideal  $H$ has dimension $d$ in $K[Y,Z]$ and $Y$ is a set of $d$ independent variables for $H$ and for $\sqrt{H}$.

For each variable $z \in Z$ we have that the variables in the set $\{Y,z\}$ are not independent for $H$. Then, there is a non-zero polynomial $g(Y,z)\in H\cap K[Y,z]$. 

Since $ H \subset HK(Y)[Z]=H'\subset \sqrt{H'}$,  for all $z\in Z$ there is a non-zero polynomial $g(Y,z)\in \sqrt{H'}K(Y)[Z]\cap K(Y)[z]$ and then $\dim_{K(Y)}(K(Y)[Z]/\sqrt{H})=0$. 
\end{itemize}

\end{proof}

As a consequence of previous lemma we have the following corollary.

\begin{corollary}\label{corollary}
Let $f$ be a polynomial in $K[Y,Z]$. If  $f\notin\sqrt{H'}$, then there is a polynomial $p(t)\in K(Y)[t]$ such that $p(f)\in \sqrt{H'}$.
\end{corollary}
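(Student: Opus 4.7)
The plan is to use Lemma \ref{lemma1}(b) to promote the statement ``$R := K(Y)[Z]/\sqrt{H'}$ has Krull dimension zero over $K(Y)$'' to the sharper statement ``$R$ is finite-dimensional as a $K(Y)$-vector space,'' and then apply a straightforward pigeon-hole / linear-dependence argument to the powers of $f$ in $R$.

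In more detail, I would first unpack the proof of Lemma \ref{lemma1}(b): for every variable $z\in Z$, the set $\{Y,z\}$ is no longer independent modulo $H$ (since $Y$ is a maximum-size independent set of cardinality $d = \dim H$), so there exists a non-zero polynomial $g(Y,z)\in H\cap K[Y,z]$. Viewed in $K(Y)[z]$, such a $g$ can be normalized to a monic polynomial in $z$ with coefficients in $K(Y)$, and its class in $R$ is zero. Hence the image $\bar z$ of each $z\in Z$ is algebraic over $K(Y)$ in $R$. Because $R$ is generated as a $K(Y)$-algebra by the finitely many $\bar z$ with $z\in Z$, and each generator is algebraic over the field $K(Y)$, the ring $R$ is a finite-dimensional $K(Y)$-vector space.

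Next I would apply the classical linear-algebra consequence: if $N := \dim_{K(Y)} R < \infty$, then the $N+1$ elements $1, \bar f, \bar f^{\,2}, \dots, \bar f^{\,N}\in R$ must be $K(Y)$-linearly dependent, so there exist coefficients $a_0,a_1,\dots,a_N\in K(Y)$, not all zero, with $a_0+a_1\bar f+\cdots+a_N\bar f^{\,N}=0$ in $R$. Defining $p(t):=a_0+a_1 t+\cdots+a_N t^N\in K(Y)[t]$ then yields a non-zero polynomial with $p(f)\in\sqrt{H'}$, which is exactly the claim. The hypothesis $f\notin\sqrt{H'}$ plays no role in the construction of $p$ itself; it is only what rules out the trivial case $p(t)=t$ and makes the statement informative.

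The only mildly delicate point, i.e.\ the ``main obstacle,'' is the passage from Lemma \ref{lemma1}(b) to genuine finite-dimensionality over $K(Y)$: one must explicitly observe that the integral relations produced in the proof of that lemma are not merely statements about Krull dimension but witnesses that each generator $\bar z$ of $R$ is algebraic over the field $K(Y)$, which is what actually forces $R$ to be a finite-dimensional vector space rather than just zero-dimensional as a ring. Once this is spelled out, the rest of the proof is routine linear algebra.
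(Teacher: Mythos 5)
Your argument is correct, but it proves the corollary by a genuinely different route than the paper. The paper passes through the prime decomposition of $\sqrt{H'}$ in $K(Y)[Z]$: by Lemma \ref{lemma1}(b) each quotient $K(Y)[Z]/\mathfrak{p}$ by an associated prime is zero-dimensional over $K(Y)$, hence its fraction field is an algebraic extension of $K(Y)$, so $f$ has a minimal polynomial $p_{\mathfrak p}$ modulo each $\mathfrak p$, and the product $\prod_{\mathfrak p} p_{\mathfrak p}$ does the job since $\bigcap \mathfrak p = \sqrt{H'}$. You instead bypass the decomposition entirely: reopening the proof of Lemma \ref{lemma1}(b), you observe that each $g(Y,z)\in H\cap K[Y,z]$ witnesses that the image $\bar z$ is algebraic (indeed integral, after normalizing) over $K(Y)$, so $R=K(Y)[Z]/\sqrt{H'}$ is a finite-dimensional $K(Y)$-vector space, and then linear dependence of $1,\bar f,\dots,\bar f^{\,N}$ produces a nonzero $p$ with $p(f)\in\sqrt{H'}$. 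Your route is more elementary and self-contained (no primary decomposition, no appeal to the fact that a finitely generated zero-dimensional domain over a field is an algebraic field extension), it actually strengthens Lemma \ref{lemma1}(b) from Krull dimension zero to module-finiteness over $K(Y)$, and it even yields an explicit degree bound $\deg p\le \prod_z \deg_z g$; the paper's route stays within the component-by-component viewpoint used throughout Section \ref{sec2} and only needs the lemma as stated. One small point you should make explicit: before normalizing $g(Y,z)$ to be monic in $z$ you need $\deg_z g\ge 1$, which follows because a nonzero $g\in K[Y]$ would contradict $H\cap K[Y]=\left<0\right>$; also, as you note, the hypothesis $f\notin\sqrt{H'}$ is not needed to produce a nonzero $p$ (it only matters for how the corollary is exploited in Theorem \ref{equivalent}), and the same is true of the paper's proof.
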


\begin{proof}
Consider the primary decomposition of $\sqrt{H'}$ in the ring $K(Y)[Z]$. 
For each associated prime  $\mathfrak{p}$ of $\sqrt{H'}$, the field of fractions of  $K(Y)[Z]/\mathfrak{p}$  will be an algebraic extension of $K(Y)$, because $K(Y)[Z]/\mathfrak{p}$ has dimension 0 over $K(Y)$ (by previous lemma). Thus, for each prime ideal $\mathfrak{p}$ associated to $\sqrt{H'}$, there is a polynomial $p_{\mathfrak{p}}(t)\in K(Y)[t]$ such that $p_{\mathfrak{p}}(f)\in \mathfrak{p}$. 

Take then $p(t)=\prod_{\mathfrak{p}} p_{\mathfrak{p}}(t)\in K(Y)[t]$. Then, $p(f)\in \cap\mathfrak{p}=\sqrt{H'}$.
\end{proof}

\begin{lemma}\label{lemma2}
$f\in \sqrt{H'}$ if and only if $\left<h_1,\dots , h_r, f\cdot t-1\right>K[Y,Z,t]\cap K[Y]\neq\left<0\right>$
\end{lemma}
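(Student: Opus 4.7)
The plan is to combine the Rabinowitsch trick with the standard extension--contraction relation that arises when passing from $K[Y,Z,t]$ to $K(Y)[Z,t]$. Since $H'=\langle h_1,\dots,h_r\rangle K(Y)[Z]$ is just the extension of $H$ under the localization $K[Y]\hookrightarrow K(Y)$, the equivalence I want should transfer naturally through that localization.

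First I would invoke the Rabinowitsch identity in its purely ideal-theoretic form, valid in any commutative ring: for an ideal $J$ and an element $f$ of a ring $R$, one has $f\in\sqrt{J}$ if and only if $1\in J\cdot R[t]+\langle ft-1\rangle$. Applied in $R=K(Y)[Z]$ with $J=H'$, this yields $f\in\sqrt{H'}$ if and only if
\[
1\in\langle h_1,\dots,h_r,ft-1\rangle K(Y)[Z,t].
\]
It is important here that Rabinowitsch requires no hypothesis of algebraic closure on the base field (the usual derivation via $1=(tf)^n+(1-tf)\bigl(1+tf+\cdots+(tf)^{n-1}\bigr)$ when $f^n\in H'$, together with specialization $t\mapsto 1/f$ in the localization at $f$, works over any ring).

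Next I would translate this membership of $1$ in the extended ideal back to a non-trivial contraction into $K[Y]$. Write $I=\langle h_1,\dots,h_r,ft-1\rangle K[Y,Z,t]$. For the forward direction, an identity $1=\sum_i g_i h_i+g_0(ft-1)$ with $g_i\in K(Y)[Z,t]$ can be cleared of denominators by choosing a common denominator $q(Y)\in K[Y]\setminus\{0\}$; multiplying through gives $q=\sum_i\tilde g_i h_i+\tilde g_0(ft-1)\in I\cap K[Y]$, a non-zero element. Conversely, any non-zero $g\in I\cap K[Y]$ is a unit in $K(Y)\subseteq K(Y)[Z,t]$, so $g\in I\cdot K(Y)[Z,t]$ forces $1\in I\cdot K(Y)[Z,t]$. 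Thus $1\in I\cdot K(Y)[Z,t]$ is equivalent to $I\cap K[Y]\neq\langle 0\rangle$, and chaining this with the first step finishes the proof.

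I do not anticipate a real obstacle. The only point that needs attention is resisting the temptation to quote Rabinowitsch in its geometric (Nullstellensatz) form: since $K(Y)$ is not algebraically closed, one must rely on the ring-theoretic version above. Everything else is bookkeeping with denominators in the localization $K[Y]\to K(Y)$, and neither the maximum-size assumption on $Y$ nor Lemma \ref{lemma1} is actually used at this stage—they will enter only when Lemma \ref{lemma2} is combined with Corollary \ref{corollary} to prove Theorem \ref{equivalent}.
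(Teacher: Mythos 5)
Your proof is correct, and it takes a somewhat different route from the paper's. The paper argues through the saturation: clearing denominators in $f^m=\sum_i k_i'h_i$ gives a non-zero $g\in K[Y]$ with $gf^m\in H$, i.e.\ $g\in(H:f^\infty)\cap K[Y]$, and then it invokes the identity $(H:f^\infty)=\langle h_1,\dots,h_r,f\cdot t-1\rangle K[Y,Z,t]\cap K[Y,Z]$ (cited from the appendix of \cite{DR}) to pass to the elimination ideal. You bypass the saturation entirely: you apply the ideal-theoretic Rabinowitsch criterion over the coefficient field $K(Y)$ --- $f\in\sqrt{H'}$ iff $1\in\langle h_1,\dots,h_r,ft-1\rangle K(Y)[Z,t]$, which indeed requires no algebraic closure --- and then identify the unit-ideal condition with the non-triviality of the contraction to $K[Y]$ by extension--contraction bookkeeping through the localization $K[Y]\hookrightarrow K(Y)$ (clear denominators in one direction, use that a non-zero element of $K[Y]$ is a unit in $K(Y)$ in the other). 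Both arguments rest on the same two ingredients, the $ft-1$ trick and denominator clearing, but yours is self-contained where the paper leans on the saturation--elimination identity as a quoted fact; the paper's version has the advantage of exhibiting $(H:f^\infty)$ explicitly, which is the same object that drives the proof of part a) of Theorem \ref{testRV}. Your closing observation is also accurate: neither Lemma \ref{lemma1} nor the maximum-size hypothesis on $Y$ is needed at this stage, consistent with the paper's own proof.
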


\begin{proof}
$f\in \sqrt{H'}$ means that there is a positive integer $m$ such that $f^m=\sum_{i=1}^r k_i' h_i$ with $k_i'\in K(Y)[Z]$. Equivalently, by clearing denominators, there is a positive integer $m$ such that $g f^m=\sum_{i=1}^r k_i h_i$ where $g\in K[Y]$ and $k_i\in K[Y,Z]$ (i.e., $g f^m\in H$).
That is, $g\in (H:f^\infty)\cap K[Y]$  where $(H:f^\infty)$ is the saturation of the ideal $H$ by $f$.

But $(H:f^\infty)=\left<h_1,\dots , h_r,f\cdot t-1\right>K[Y,Z,t]\cap K[Y,Z]$ (c.f.~\cite{DR}, Appendix, Proposition 6), then $\left<h_1,\dots , h_r, f\cdot t-1\right>K[Y,Z,t]\cap K[Y]\neq\left<0\right>$.
\end{proof}

Let us prove Theorem \ref{equivalent}.

\begin{proof}

To prove the ``if'' part, let assume that $f$ is a non-zero zero divisor in $K(Y)[Z]/\sqrt{H'}$. As $f$ is non-zero in $K(Y)[Z]/\sqrt{H'}$, $f\notin\sqrt{H'}$. Then, by Lemma \ref{lemma2}, we have that  
$$\left<h_1,\dots , h_r, f\cdot t-1\right>K[Y,Z,t]\cap K[Y]=\left<0\right>.$$

Suppose also that  $\left<h_1,\dots , h_r, f\right>K[Y,Z]\cap K[Y]\neq \left<0\right>$, then $(\sqrt{H}+\left<f\right>)K[Y,Z]\cap K[Y]\neq \left<0\right>$. Therefore, there is a polynomial $g\in K[Y]$, such that $g=k+qf$ with $k\in \sqrt{H}$ and $q\in K[Y,Z]$. Dividing by $g$ we obtain 
$$1=\frac{k}{g}+\frac{q}{g}f$$
\noindent and notice that $\frac{k}{g}\in \sqrt{H'}K(Y)[Z]$ and $\frac{q}{g}\in K(Y)[Z]$. So $f$ is a unit in $K(Y)[Z]/\sqrt{H'}$. Then, $f$ cannot be a zero divisor in $K(Y)[Z]/\sqrt{H'}$.

Now let us prove the ``only if'' part. Assume that 
$$\left<h_1,\dots , h_r, f\cdot t-1\right>K[Y,Z,t]\cap K[Y]=\left<0\right> \text{ and } \left<h_1,\dots , h_r, f\right>K[Y,Z]\cap K[Y]=\left<0\right>.$$ 
Then, by Lemma \ref{lemma2}, $f\notin \sqrt{H'}$ and, by Corollary \ref{corollary}, there is a polynomial $p(t)\in K(Y)[t]$ such that $p(f)\in \sqrt{H'}$. 

Take a polynomial $p'\in K(Y)[t]$ of minimum degree in $t$ with this property. Then we have two possibilities:
\begin{enumerate}[(i)]
\item $p'$ has an independent term in $K(Y)$: as $p'(f) \in \sqrt{H'}$, take a convenient power of $p'(f)$ and clearing denominators, we will obtain an independent term in $K[Y]$ that will be a combination of  $h_1,\dots,h_r$ and $f$. Thus,   $\left<h_1,\dots , h_r, f\right>K[Y,Z]\cap K[Y]\neq\left<0\right>$, in contradiction with one of our hypotheses.
\item $p'$ does not have an independent term: we can take $f$ as a common factor in $p'(f) \in \sqrt{H'}$, yielding  $p'(f)=f\cdot q(f) \in\sqrt{H'}$.  Moreover, $q(f)$ cannot be in $\sqrt{H'}$, because it has a lower degree than $p'(t)$.  Thus we will have  $f\cdot q(f)=0$ in $K(Y)[X]/\sqrt{H'}$ and $q(f)$ non-zero in $K(Y)[X]/\sqrt{H'}$. Therefore, $f$ is a zero divisor in $K(Y)[X]/\sqrt{H'}$.
\end{enumerate}
\end{proof}

We have proved that the tests in \cite{RV99}  and  \cite{ZWS} for being true on components  are equivalent, even in our generalized context, although requiring a maximum-size set of independent variables. Consequently, we have now a new  algorithm (based on the direct application of  Theorem \ref{testRV}) to check if a geometric statement is true on components, by merely  using elimination in polynomial ideals,  in contrast with the one presented in \cite{ZWS} that requires computing a Gr\"obner basis over a field of fractions, and checking if $f$ is a zero divisor of the radical of some ideal in a extended ring. 

To illustrate this new test, we have chosen Example 3.8 in \cite{ZWS}, where truth on components is checked by the zero divisor test. We apply here our test  using the computer algebra system Maple 2017.0, although it can be computed in whatever system doing elimination in polynomial ideals. The computations  have been performed in a few seconds.

\begin{example}\label{ex3.8}
 Let $ABC$ be a triangle with $A(0,0)$, $B(1,0)$ and $C(u_1,u_2)$, and let $A_1BC$, $AB_1C$ and $ABC_1$ be three equilateral triangles  erected on the three sides of $ABC$. Then, check if the  segments $B_1C_1$ and $A_1C$ have the same length, that is, $|B_1C_1|=|A_1C|$.

Taking coordinates $A_1=(x_1,x_2)$, $B_1=(x_3,x_4)$ and $C_1=(x_5,x_6)$, the hypotheses ideal is given by  the following polynomials describing the equalities between the sides of the three equilateral triangles:

$$\begin{array}{ll}
|AC_1|=|AB|: \;\; & h_1= x_5^2+x_6^2-1 \\
|BC_1|=|AB|: \;\; & h_2= (x_5 - 1)^2 + x_6^2- 1 \\
|CA_1|=|BC|: \;\; & h_3= (x_1 - u_1)^2 + (x_2 - u_2)^2 - (u_1 - 1)^2 - u_2^2 \\
|BA_1|=|BC|: \;\; & h_4= (x_1 - 1)^2 + x_2^2- (u_1 - 1)^2 - u_2^2  \\
|AB_1|=|AC|: \;\; & h_5= (x_3^2+ x_4^2) - (u_1^2+u_2^2) \\
|CB_1|=|AC|: \;\; & h_6=  (x_3 - u_1)^2 + (x_4 - u_2)^2 - (u_1^2+ u_2^2).
\end{array}$$

And the thesis $|B_1C_1|=|A_1C|$ is given by the polynomial
$$f:=(x_5 - x_3)^2 + (x_6 - x_4)^2 - (x_1 - u_1)^2 - (x_2 - u_2)^2.$$

The set of variables $X=\{u_1,u_2,x_1,x_2,x_3,x_4,x_5,x_6\}$ has 8 elements.  By the geometric construction, the variables  $\{u_1,u_2\}$ can be considered as the  free variables in the hypotheses ideal $H$. In fact, one can check  that they are even a maximum-size set of independent variables for $H$. In Maple, after downloading the package \emph{PolynomialIdeals} and defining the ideal
$$\begin{array}{l}
{H:=\left<x_5^2+x_6^2-1,(x_5 - 1)^2 + x_6^2- 1,\right.}\\
{(x_1 - u_1)^2 + (x_2 - u_2)^2 - (u_1 - 1)^2 - u_2^2,(x_1 - 1)^2 + x_2^2- (u_1 - 1)^2 - u_2^2},\\
{\left.(x_3^2+ x_4^2) - (u_1^2+u_2^2), (x_3 - u_1)^2 + (x_4 - u_2)^2 - (u_1^2+ u_2^2)\right>},
\end{array}$$
\noindent we compute its Hilbert dimension (by using the command \texttt{HilbertDimension($H$)}), yielding that it is 2 and (with the help of command \texttt{MaximalIndependentSet($H$)}) that $\{u_1,u_2\}$ is, as expected, a maximum-size set of independent variables.

Then, we check if the statement is true on parts, false on parts,  by eliminating all variables except $\{u_1,u_2\}$ in the ideals $\left<h_1,\dots,h_6, f\cdot t-1\right>$ and $\left<h_1,\dots,h_6, f\right>$.
$$\begin{array}{l}
{\texttt{EliminationIdeal}(\left<x_5^2+x_6^2-1,(x_5 - 1)^2 + x_6^2- 1,\right.}\\[4pt]
{(x_1 - u_1)^2 + (x_2 - u_2)^2 - (u_1 - 1)^2 - u_2^2,(x_1 - 1)^2 + x_2^2- (u_1 - 1)^2 - u_2^2,}\\[4pt]
{(x_3^2+ x_4^2) - (u_1^2+u_2^2), (x_3 - u_1)^2 + (x_4 - u_2)^2 - (u_1^2+ u_2^2),}\\[4pt] 
{\left.((x_5 - x_3)^2 + (x_6 - x_4)^2 - (x_1 - u_1)^2 - (x_2 - u_2)^2)\cdot t-1\right>,\{u_1, u_2\});}\\[4pt]
{ \hspace{4cm}                             \left<\, 0\right>}\\[6pt]
{\texttt{EliminationIdeal}(\left<x_5^2+x_6^2-1,(x_5 - 1)^2 + x_6^2- 1,\right.}\\[4pt]
{(x_1 - u_1)^2 + (x_2 - u_2)^2 - (u_1 - 1)^2 - u_2^2,(x_1 - 1)^2 + x_2^2- (u_1 - 1)^2 - u_2^2,}\\[4pt]
{(x_3^2+ x_4^2) - (u_1^2+u_2^2), (x_3 - u_1)^2 + (x_4 - u_2)^2 - (u_1^2+ u_2^2),}\\[4pt] 
{\left.(x_5 - x_3)^2 + (x_6 - x_4)^2 - (x_1 - u_1)^2 - (x_2 - u_2)^2)\right>,\{u_1, u_2\});}\\[4pt]
{  \hspace{4cm}                             \left<\, 0\right>}
\end{array}$$
We obtain that both eliminations give the $\left<0\right>$ ideal, and we conclude, by the elimination test, that this statement is true and false on certain components, i.e. \emph{true on parts, false on parts}.

\end{example}

We would like to point out that in Example 3.8 of \cite{ZWS}, reproduced above, the authors also consider  $\{u_1,u_2\}$ as the set of choice for independent variables, which is, as previously remarked,  a maximum-size set of independent variables. The following example shows that the elimination test we have presented and the zero divisor test of \cite{ZWS}  do not agree if the number of independent variables is not maximum-size. Moreover, it explains also some pretended error detected by \cite{ZWS}, Remark 3.6, concerning a discrepancy with \cite{MR}.

\begin{example}\label{ex1}
Consider $H=\left<xy,x^2\right>$ and $f=y$ in $K[x,y,z]$.  The ideal $\left<xy,x^2\right>K[x,y,z]$ has dimension $2$ and $\{y,z\}$ is a maximum-size set of independent variables. Take, instead,  $Y=\{z\}$,  which is a set of independent variables, but not maximum-size. Then, 
$$
\left<xy,x^2,y\cdot t-1\right>K[x,y,z,t]\cap K[z]=\left<0\right>\; \mathrm{ and }\;
\left<xy,x^2,y\right>K[x,y,z]\cap K[z]=\left<0\right>.$$
\noindent So if we apply the test \cite{RV99} in this specific situation in which one does not consider all possible independent variables, the statement seems to be \emph{true on parts, false on parts} (i.e. true on components).

But on the other hand, $y$ is not  a zero divisor in $K(z)[x,y]/\sqrt{\left<xy,x^2\right>}$. In fact, $\sqrt{\left<xy,x^2\right>}=(x)$, so  $K(z)[x,y]/\sqrt{\left<xy,x^2\right>}= K(z)[x,y]/(x)= K(z)[y]$ which is a domain of integrity and it does not contain  zero divisors. 
Therefore, the statement is not true on components, according to the test \cite{ZWS}. Actually the statement is generally false, because $y$ is not zero over the plane $\{x=0\}$.
\end{example}

\begin{remark}
Just by following the proof of Theorem \ref{testRV}, it is easy to conclude that, without any maximum-size restriction for the independent variables, it holds that to be true on components (and, thus, verifying the criteria of Theorem \ref{testZWS})  implies the two conditions stated in Theorem \ref{testRV}, but not conversely, as the above example shows.  So, without the maximum-size restriction, our test is just a necessary condition for a true on components situation, not a sufficient one (but it becomes sufficient for sets of independent variables of  maximum-size).
\end{remark}

\section{Examples in dynamic geometry}
\label{sec3}
In this section we refer to the dynamic geometry system \textit{GeoGebra},  which deals
with the concept of \emph{true on parts, false on parts} since version 5.0.443.0 (10 March 2018). Thus, statements which were formerly classified as  \emph{not generally true} and, therefore,  simply considered as \emph{false}
by \textit{GeoGebra}, are  now subject to a finer classification (for instance, some of them can now be classified as \emph{true on parts, false on parts})   yielding a more accurate information to the user.

The implementation in  \textit{GeoGebra} of this new feature follows Theorem \ref{testRV} by computing both eliminations and deciding
if the ``intuitively''\footnote{But automatically  chosen by the program. See \url{https://dev.geogebra.org/trac/browser/trunk/geogebra/common/src/main/java/org/geogebra/common/kernel/prover/HilbertDimension.java} for details.} full set of independent variables is actually maximum-size, by determining the Hilbert
dimension of the hypotheses
ideal. We outline the steps of this algorithm  to decide the truth/falsity of a statement $\{H  \Rightarrow  T\}$:
\begin{enumerate}
\item First select,  through the coordinates of the free points of the configuration and following the construction steps performed by the user when drawing the figure that illustrates the given statement, a  set  $Y$  of  independent variables. Check if they  are actually algebraically independent, $H\cap K[Y]=\left<0\right>$.

\item Verify if the Hilbert's dimension of $H$ agrees with the cardinality  of $Y$. If this is not the case, the user is advised to \textit{check for  degenerations in the construction} (END). Otherwise, continue.

\item  Compute $\left<h_1,\dots , h_r, f\cdot t-1\right>K[X,t]\cap K[Y]$. If it is  $\neq\left<0\right>$,  the statement is \textit{generally true} (END). Otherwise, continue.

\item  Compute $\left<h_1,\dots , h_r, f\right>K[X]\cap K[Y]$. If it is  $\neq\left<0\right>$,  the statement is \textit{generally false} (END). 

\item  Otherwise, the statement is  \emph{true on parts, false on parts}.
\end{enumerate}

\begin{example}
Assume two points $A$ and $B$ are given. A rhombus $ABDC$ is to be constructed, as displayed in Fig.~\ref{rhombus},  by
allowing point $C$ to be freely chosen under the restriction that the segments $AB(=f)$ and $AC(=g)$ are equal.
That is, $C$ is a circumpoint of the circle with center $A$ and radius $f$, and $D$ is an intersection
of a circle $c$ with center $C$ and radius $f$, and a line $h$ through $C$ and parallel to $f$.

Now we consider the well-known statement that the diagonals of a rhombus are perpendicular, but this
fact (namely, $k\perp l$) can only be
observed on the component of the hypotheses variety that contains $D$. For the other component, where $D'$  lies (which is
the other intersection point of circle $c$ and line $h$), the fact $k\parallel l$ can be detected.

\begin{figure}[h]
\includegraphics[scale=0.9]{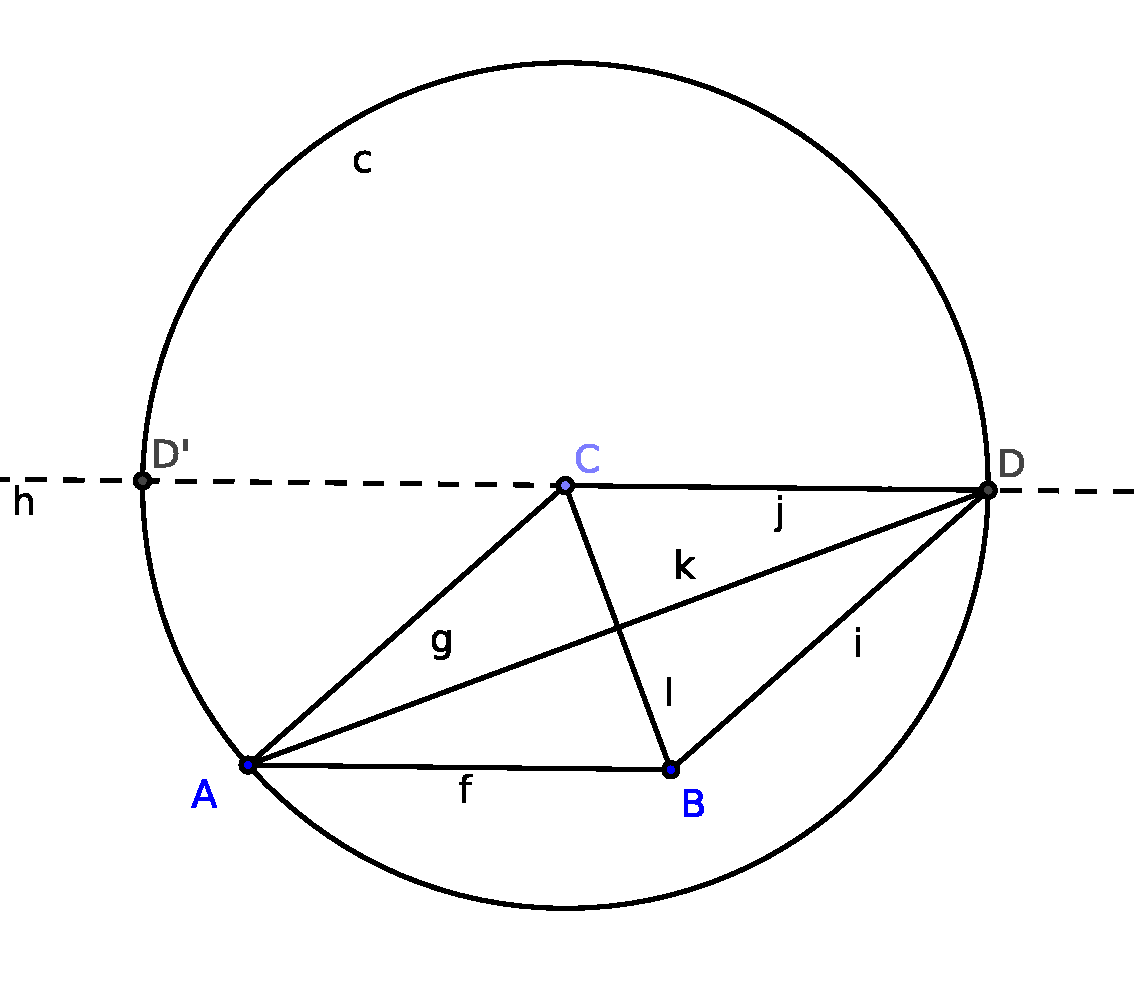}\caption{Constructing a rhombus (an example provided by Andreas Lindner)}
\label{rhombus}
\end{figure}

To support the first glance, \textit{GeoGebra} provides a numerical way to verify whether the perpendicular property is true.
When comparing objects $k$ and $l$ by using the \textit{Relation tool} in \textit{GeoGebra}, after a numerical overview
(Fig.~\ref{RelNum}) a symbolic proof (Fig.~\ref{RelSymb}) can be achieved yielding that the statement
is ``true on parts, false on parts". (The algebraic details of
the proof are not visible for the user,  to avoid confusion.) This kind of result can also be automatically obtained
by using \textit{GeoGebra}'s low-level \texttt{ProveDetails[$k\perp l$]} command---and also for the parallelism, the \texttt{ProveDetails[$k\parallel l$]} command.
By getting \texttt{\{true, "c"\}} as output we are warned that these results are true on some \textit{components} only.

\begin{figure}[h]
\includegraphics[scale=0.4]{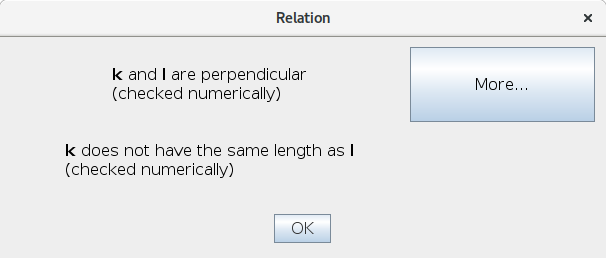}\caption{A numerical approach for detecting truth in \textit{GeoGebra}}
\label{RelNum}       
\end{figure}

\begin{figure}[h]
\includegraphics[scale=0.4]{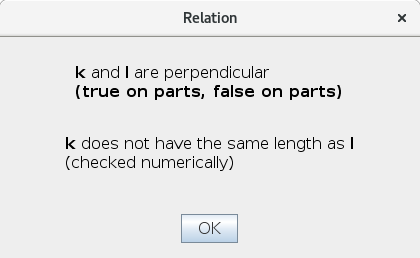}\caption{A symbolic approach for detecting truth in \textit{GeoGebra}}
\label{RelSymb}       
\end{figure}

Let us summarily consider the algebraic counterpart of this geometric construction, that is, how \textit{GeoGebra} automatically
sets up the input polynomials for running the described algorithm. Free points are defined with coordinates
$A(v_1,v_2)$, $B(v_3,v_4)$. Since point $C$ is defined as a point on a circle with center $A$ and radius $f$ (that
is the segment $AB$), \textit{GeoGebra} introduces a hidden technical point $X_1(v_5,v_6)$ to describe the vector $\overrightarrow{AB}$ by using
equations
\begin{equation}
h_1=v_{5}-v_{3}=0,\label{ee1}
\end{equation}
\begin{equation}
h_2=v_{6}-v_{4}=0,\label{ee2}
\end{equation}
and then, the constrained point $C(v_7,v_8)$ is given by the equation
\begin{equation}
h_3=-v_{8}^{2}-v_{7}^{2}+v_{6}^{2}+v_{5}^{2}+2v_{8}v_{2}-2v_{6}v_{2}+2v_{7}v_{1}-2v_{5}v_{1}=0.\label{ee3}
\end{equation}
Now line $h$ can be described as going through $C$ and parallel to $f$, by implicitly creating hidden technical point $X_2(v_9,v_{10})$ such that the parallel line joins $C$ and $X_2$:
\begin{equation}
h_4=v_{9}-v_{7}-v_{3}+v_{1}=0,\label{ee4}
\end{equation}
\begin{equation}
h_5=v_{10}-v_{8}-v_{4}+v_{2}=0,\label{ee5}
\end{equation}

Another technical point $X_3(v_{11},v_{12})$ is introduced as a circumpoint of circle $c$ with center $C$ and radius $f$
with the help of equations
\begin{equation}
h_6=v_{11}-v_{7}-v_{3}+v_{1}=0,\label{ee6}
\end{equation}
\begin{equation}
h_7=v_{12}-v_{8}-v_{4}+v_{2}=0.\label{ee7}
\end{equation}
As the final step to describe the hypotheses the intersection point $D(v_{13},v_{14})$ of line $h$ and circle $c$ is defined by
\begin{equation}
h_8=v_{13}v_{10}-v_{14}v_{9}-v_{13}v_{8}+v_{9}v_{8}+v_{14}v_{7}-v_{10}v_{7}=0,\label{ee8}
\end{equation}
\begin{equation}
h_9=-v_{14}^{2}-v_{13}^{2}+v_{12}^{2}+v_{11}^{2}+2v_{14}v_{8}-2v_{12}v_{8}+2v_{13}v_{7}-2v_{11}v_{7}=0.\label{ee9}
\end{equation}

The thesis equation is
\begin{equation}
f=v_{14}v_8+v_{13}v_7-v_{14}v_4-v_{13}v_3-v_8v_2+v_4v_2-v_7v_1+v_3v_1=0.\label{et}
\end{equation}

Without loss of generality \textit{GeoGebra} assumes that $v_1=v_2=0$. After performing these substitutions and
using the notations from Theorem \ref{testRV}, we have $$X=\{v_3,v_4,v_5,v_6,v_7,v_8,v_9,v_{10},v_{11},v_{12},v_{13},v_{14}\},$$ 
$$Y=\{v_3,v_4,v_7\},\ K=\mathbb{Q},$$ and \textit{GeoGebra} computes 
$$\left<h_1,\dots , h_9, f\cdot t-1\right>K[X,t]\cap K[Y]$$%
and 
$$\left<h_1,\dots , h_9,f\right>K[X]\cap K[Y].$$
Since both are $\left<0\right>$ and the computed Hilbert dimension of $\left<h_1,h_2,\dots,h_9\right>$  is 3,
equal to $|Y|$, the statement is identified as ``true on parts, false on parts''. The result is computed by \textit{GeoGebra}
below 1 second on a modern PC.

This \textit{GeoGebra} example can also be tried out online at \url{https://www.geogebra.org/m/VeAxJHmS}. Clearly, the
number of used variables and equations is an overkill here, but since it is organized completely automatically by the
software translating
the geometry statement to an algebraic setup, it can be acceptable. Also other variables might be
substituted to some concrete integer numbers, e.g.~$v_3=0,v_4=1$, yielding further simplifications.
Such issues should be thoroughly addressed in future versions of \textit{GeoGebra}.
\end{example}

Further examples can be found in \textit{GeoGebra}'s automated benchmarking database at
\url{https://prover-test.geogebra.org/job/GeoGebra-provertest/566/artifact/test/scripts/benchmark/prover/html/all.html}
as of 24th March,  2018.
All database cells that contain the single character
\texttt{c} refer to an identified case of true on \textit{components}.

\section{Final reflections}
At first glance it could seem that we this paper deals with an algorithm for detecting  geometric statements that are true on some instances and false in some others\dots But, indeed,  surely most affirmations are like this: holding in some very particular cases, failing in most of them. What could be the interest of  automatically confirming the status of such irrelevant assertions?  

First of all, let us remark that we are \emph{not} dealing with all statements having this bivalent condition: we are focusing on those statements such that both the set of instances where the statement holds and the set where it fails are quite large and, in many respects,  geometrically meaningful. Indeed, it happens, in general, that the irreducible components of the hypotheses variety carry some special geometric significance.

Thus, as argued in Section 2 with references and examples,  detecting a \emph{true on parts, false on parts} statement provides some interesting and intriguing information on the geometry of the given problem, yielding sometimes, with the help of human intelligence, to the discovery of relevant geometric facts; and, in all cases, rising a warning sign for the  human user  on the need to start thinking there could be ``something" relevant behind!

For instance, at a very basic level, this \emph{true on parts, false on parts} situation could point out to the need to revise  some constructions  steps, that could indeed have got to be improved to avoid the bivalent behavior of a given statement in the different components associated to the construction. Thus, in Example 3.7  in \cite{ZWS}, if, instead of using circles (yielding to a a \emph{true on components} conclusion) , the user  deals just with rotations by 90 degrees, a clearly true statement is obtained.

On the other hand, some constructions cannot be improved by choosing a different approach. For instance,
 Example \ref{ex3.8} (also in \cite{ZWS} Example 3.8) deals with two potential constructions of equilateral triangles over each side of a given triangle, undistinguishable, at least, from the complex algebraic geometry approach. That is,
in dealing with such statements the notion of ``true on components'' is unavoidable.

In \textit{GeoGebra}'s Automated Reasoning Tools, where we have performed the examples in Section \ref{sec3},   the case \emph{true on parts, false on parts} is considered as a particular case of truth (true, but on some components only, the user is warned!).
We need to admit that,  from a rigorous point of view, this case could also be considered
as a particular kind of failure. Are the segments $k$ and $l$ in Fig.~\ref{rhombus} perpendicular if $D$ is
defined as an intersection of line $h$ and circle $c$? ``Well, no!''---that would
be the answer of a rigorous maths professor. It seems however more supporting and creative for the student (and the researcher), even if the statement is, strictly speaking, not always true,   
to get a partially yes-answer,  mentioning  that some additional
hypothesis---to be discovered by the user---may be required to achieve the complete truth of the investigated statement.

Finally,  we would like to emphasize that this approach, in our opinion, can also be particularly
fruitful in an educational context. Automated classification of statements
can be helpful in homework for self-evaluation of the
student; or in automated exam correction, helping the teacher. In fact, in the previous sections  we have already
shown  several elementary statements in Euclidean planar geometry that
could be identified as ``true on components''.  

In our opinion,  and as shown by the examples and references mentioned in Section 2, this subtle notion of \emph{true on parts, false on parts} is both interesting for the researcher, as a powerful tool for discovery,  and for the student, and can rise  even when considering statements of very basic  geometry.  Thus, it deserves to be functional in whatever automatic geometry reasoning program aiming to be considered fully useful for researchers and students. The algorithm and implementation we have presented in this paper could be considered as a first, but already fully performing,  step in this direction.


\end{document}